\def\thickhline{\noalign{\hrule height.8pt}}
\newtheorem{theorem}{Theorem}
\newtheorem{definition}{Definition}
\newtheorem{lemma}{Lemma}
\def\BibTeX{{\rm B\kern-.05em{\sc i\kern-.025em b}\kern-.08em
    T\kern-.1667em\lower.7ex\hbox{E}\kern-.125emX}}
\begin{document}
\title{Bias-Eliminated PnP for Stereo Visual Odometry: Provably Consistent and Large-Scale Localization}
\author{Guangyang Zeng*, Yuan Shen*, Ziyang Hong, Yuze Hong, Viorela Ila, Guodong Shi, Junfeng Wu
\thanks{Guangyang Zeng, Yuan Shen, Ziyang Hong, Yuze Hong, and Junfeng Wu are with the School of Data Science, Chinese
University of Hong Kong, Shenzhen 518172, China.

Viorela Ila and Guodong Shi are with the Australian Center for Robotics and
School of Aerospace, Mechanical and Mechatronic Engineering, University of Sydney, Australia.

* Equally contributed.}}


\maketitle

\begin{abstract}
In this paper, we first present a bias-eliminated 
weighted (Bias-Eli-W)
perspective-n-point (PnP) estimator for stereo visual odometry (VO) with provable consistency. Specifically, leveraging statistical theory, we develop an asymptotically unbiased and $\sqrt {n}$-consistent PnP estimator that accounts for varying 3D triangulation uncertainties, ensuring that the relative pose estimate converges to the ground truth as the number of features increases.
Next, on the stereo VO pipeline side, we propose a framework that continuously triangulates contemporary features for tracking new frames, effectively decoupling temporal dependencies between pose and 3D point errors. We integrate the Bias-Eli-W PnP estimator into the proposed stereo VO pipeline, creating a synergistic effect that enhances the suppression of pose estimation errors.
We validate the performance of our method on the KITTI and Oxford RobotCar datasets.
Experimental results demonstrate that our method:
1) achieves significant improvements in both relative pose error and absolute trajectory error in large-scale environments;
2) provides reliable localization under erratic and unpredictable robot motions.
The successful implementation of the Bias-Eli-W PnP in stereo VO indicates the importance of information screening in robotic estimation tasks with high-uncertainty measurements, shedding light on diverse applications where PnP is a key ingredient.
\end{abstract}

\begin{IEEEkeywords}
Stereo visual odometry, PnP pose estimation, large-scale localization, consistent estimator.
\end{IEEEkeywords}

\section{Introduction} \label{introduction}
\IEEEPARstart{V}{isual}  odometry (VO) refers to estimating the pose of a moving camera in a 3D space from sequential images captured by the camera. The significance of VO stems from its advantages of being infrastructure-free, cost-effective, lightweight, energy-efficient, etc~\cite{campos2021orb,merrill2023fast,jadidi2019continuous}. It enables robots to perceive and navigate their environment autonomously. 
Compared with monocular VO, stereo VO offers several advantages, such as scale consistency, better accuracy, and enhanced robustness, due to its ability to perceive depth directly~\cite{leutenegger2015keyframe,duan2022stereo}.


Existing VO methods typically optimize both camera poses and 3D map points simultaneously, with the map being used to track new frames through the perspective-n-point (PnP) algorithm~\cite{campos2021orb,ferrera2021ov,leutenegger2015keyframe}. These methods often lack accurate uncertainty estimation for point correspondences and neglect to incorporate estimator optimization and analysis to provide statistically grounded PnP estimates with theoretical guarantees. 
One of the key challenges in accurately estimating point uncertainties arises from the commonly adopted framework, where an initial frame is fixed, and all subsequent poses and 3D points are represented relative to this frame. In this setup, uncertainty estimation becomes inherently unreliable, as linear error propagation through the transformation chain becomes increasingly inaccurate~\cite{leutenegger2015keyframe}.
To address the issue of unbounded uncertainty growth, some methods have adopted a relative representation, where each frame is connected to its predecessor through a relative pose, and 3D points are anchored to the frame in which they are first observed~\cite{mei2011rslam,leutenegger2015keyframe}. While these approaches mitigated the problem of unbounded uncertainty growth, the simultaneous localization and mapping (SLAM) pipeline they employ introduces temporal coupling between pose and 3D point errors. This coupling hinders precise uncertainty estimation and can result in suboptimal relative pose estimation.

Interestingly, SOFT2~\cite{cvivsic2022soft2}, the top-ranked method on the KITTI odometry benchmark, adopts a relative pose representation and focuses solely on pure odometry rather than SLAM, i.e., it does not involve 3D point optimization. Although the authors did not explicitly mention this, it is a key factor contributing to its outstanding performance on the KITTI dataset, as odometry excels in minimizing relative pose error (RPE).
However, SOFT2 is a highly specialized solution tailored to autonomous driving scenarios, as it incorporates techniques specifically designed for ground vehicles. Furthermore, the algorithm has not yet been open-sourced, limiting its accessibility and reproducibility. 

Inspired by SOFT2, this paper focuses on a pure odometry framework that emphasizes relative pose estimation while excluding 3D point optimization. 
Specifically, we propose CurrentFeature Odometry, which leverages only the triangulated feature points from the current keyframe (KF) for PnP-based relative pose tracking. This framework effectively breaks the temporal coupling between pose and 3D point errors.
Building on this decoupling, we accurately model the uncertainties of the 3D points and optimize the estimator from a statistical perspective. This results in consistent relative pose estimation, i.e., the estimate converges to the true value as the number of points increases.
By integrating the consistent PnP estimator into the proposed stereo VO framework, we demonstrate that CurrentFeature Odometry not only achieves significantly lower RPE but also surpasses state-of-the-art (SOTA) SLAM algorithms in terms of absolute trajectory error (ATE).


The main contributions of this paper are three-fold:


\begin{itemize}
\item A bias-eliminated weighted (Bias-Eli-W) PnP estimator that accounts for varying 3D triangulation uncertainties is presented
 with provable consistency. Theoretical guarantees ensure that the relative pose estimate converges to the ground truth as the number of features increases.

    \item A novel stereo VO framework, CurrentFeature Odometry, is proposed. This framework utilizes only triangulated points from the current KF for PnP pose tracking, effectively breaking the temporal coupling between 3D triangulation and future pose estimation, which facilitates uncertainty propagation analysis and error suppression in pose estimation. 
    \item The proposed framework enables effective uncertainty analysis in feature matching and triangulation. The feature matching error covariance can be consistently estimated, offering flexibility to adapt to various feature detection and tracking schemes. Triangulation uncertainty is estimated in a manner tailored to feature point locations. These estimates, in turn, feed back into our Bias-Eli-W PnP.
\end{itemize}

Experimental results on the KITTI and Oxford RobotCar datasets  demonstrate that our method: 1) achieves significant
improvements ($24\%$ in RPE
 and $28\%$ in ATE on average compared to the second-best) on KITTI color sequences; 2) provides reliable localization
under erratic and unpredictable robot motions. 
Our code will be open-sourced at \url{https://github.com/LIAS-CUHKSZ/CurrentFeature-Odometry}.


\textbf{Notations:} For a vector $v$, $[v]_i$ denotes its $i$-th element, while $[v]_{i:j}$ represents the subvector containing its $i$-th to $j$-th elements. Throughout this paper, we follow the convention that the super-scripted $(\cdot)^o$ represents the true (or noise-free) value of a variable $(\cdot)$ 
and $\hat{(\cdot)}$ an estimate to $(\cdot)$.

\section{Related work} \label{related_work}

A large body of stereo VO algorithms is geometry based, leveraging geometric rigidity from classic computer vision models to estimate camera motion. Meanwhile, an increasing number of works use deep learning to infer motion directly from raw image sequences in an end-to-end manner. These include monocular VO in supervised learning pipelines ~\cite{wang2017deepvo,clark2017vinet,costante2020uncertainty}, unsupervised learning pipelines ~\cite{li2018undeepvo,zhan2018unsupervised}, and stereo VO ~\cite{teed2021droid}. Additionally, hybrid approaches integrate deep learning not only for feature detection and matching but also for feature tracking and depth estimation with geometry-based optimization in various contexts ~\cite{czarnowski2020deepfactors,yang2020d3vo,zhan2020visual,bescos2018dynaslam}.
In this paper, we primarily focus on geometry-based stereo VO, which can be broadly categorized into two types: feature-based methods and direct methods.

Feature-based stereo VO methods typically detect and match feature points between consecutive frames, estimating motion by minimizing metrics such as reprojection errors~\cite{leutenegger2015keyframe,duan2022stereo,ferrera2021ov}. As reported in the literature~\cite{mur2017orb,warren2013high}, translation estimation is primarily influenced by 3D points close to the camera, while rotation estimation is sensitive to 3D points both near and far, with far points defined as having depths exceeding 40 times the stereo baseline. Consequently, for works tailored to ground vehicles~\cite{cvivsic2022soft2,de2016stereo}, detecting close ground features is critical for accurate translation estimation.
To address scenarios with sparse or poorly distributed points, some studies combine point and line features~\cite{gomez2019pl,gomez2016robust}. More recently, Fontan \emph{et al.}~\cite{fontan2024anyfeature} introduced an automated pipeline that can seamlessly switch between six different feature types, providing enhanced adaptability.

Direct methods bypass the need for feature detection and instead optimize the pose directly by minimizing photometric errors~\cite{engel2015large,wang2017stereo,liu2017direct}. Leveraging pixel intensities in the image can enhance motion estimation accuracy. However, the reliance on the brightness constancy assumption may make these methods sensitive to noise. Additionally, estimating the depth for each pixel used often leads to computational inefficiency. To address these limitations, some works have combined the strengths of both feature-based and direct methods, resulting in semi-direct algorithms~\cite{forster2016svo,krombach2018feature}.

Another, less common, categorization method is based on the geometric relationship used for motion estimation~\cite{cvivsic2022soft2}. 3D-3D methods triangulate 3D points from the stereo current frame and use the iterative closest point algorithm to align them with the previous 3D point cloud~\cite{fraundorfer2011visual}. 2D-2D methods rely on epipolar constraints for pose tracking~\cite{cvivsic2022soft2}. However, the majority of works adopt a 3D-2D approach, where previous 3D points are aligned with current 2D points using the camera projection model and the pose to be estimated~\cite{pire2017s,yin2022dynam,qin2019general}.
Cvi\v{s}i\'c \emph{et al.}~\cite{cvivsic2022soft2} applied a pure 2D-2D method and achieved SOTA performance on the KITTI stereo odometry benchmark. However, 2D-2D algorithms are practically not robust to variations in translation distance or point distribution. The combination with a 3D-2D method will be a better choice.


\section{System overview} \label{system_overview}
The system overview of CurrentFeature Oodmetry is illustrated in Figure~\ref{system_overview}. In the front end, when a new stereo frame arrives, feature detection is first performed on a single image (in our case, the left image is chosen). Then, feature tracking with the KF and outlier rejection
are sequentially performed,
generating the point correspondences among the stereo KF and the left image of the current frame (CF). If the number of successfully tracked points falls below a predefined threshold or the distribution of the points degenerates, a new KF is created.
In the back end, 3D points are triangulated using the matched features in the stereo KF. The variance of 2D feature noise is also estimated, which is then used to calculate the uncertainties of the triangulated 3D points. With these elements, a bias-eliminated weighted (Bias-Eli-W) PnP estimator computes the pose of the CF relative to the KF. Finally, a sliding-window epipolar bundle adjustment (BA) is performed to refine the odometry.
In what follows, we will detail key modules involved.

\begin{figure}[!t]
    \centering
    \includegraphics[width=0.98\linewidth]{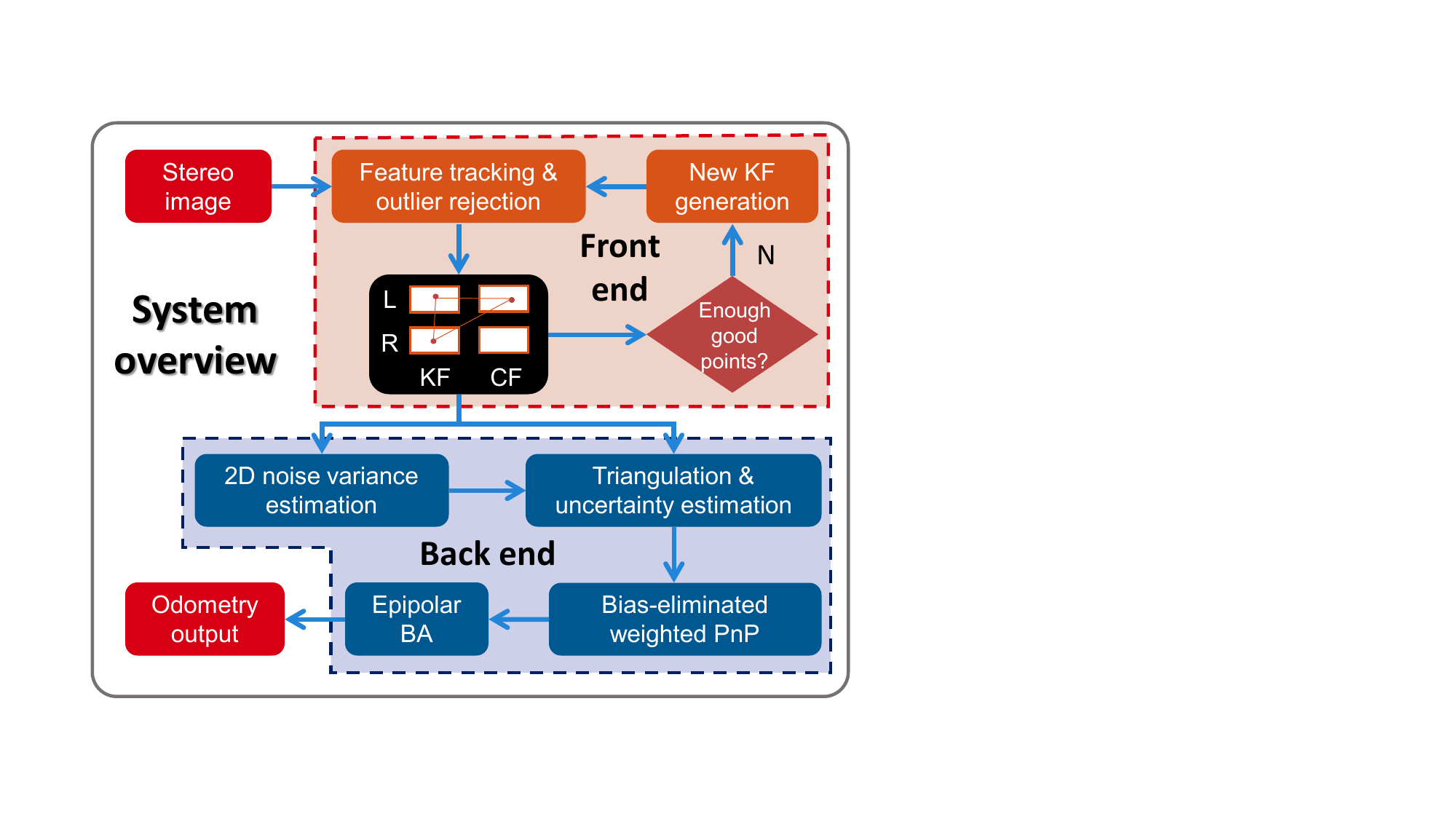}
    \caption{System overview. L and R refer to the left and right images, respectively, and KF and CF denote the keyframe and current frame, respectively.}
    \label{system_overview}
\end{figure}

\section{Front end} \label{front_end}


The front end module serves as the initial processing pipeline, handling raw images and creating KFs. Similar to OV\textsuperscript{2}SLAM~\cite{ferrera2021ov}, the pipeline consists of several key components: image pre-processing, feature association, outlier removal, and KF generation.

To enhance feature detection and tracking robustness, we first apply CLAHE~\cite{zuiderveld1994contrast} for image quality enhancement. Our tracking operates in a sliding-window manner, where each new frame is localized relative to the latest KF that serves as the local coordinate origin. Specifically, we track newly detected features using pyramidal Lucas-Kanade optical flow to establish 2D-2D matches between the left images of consecutive frames. As these 2D features are continuously tracked, their associated 3D coordinates in the KF are already stored. 


To ensure robustness against potential mismatches, we implement a two-stage geometric verification strategy. We first apply the five-point algorithm with RANSAC to estimate the essential matrix and filter initial outliers from the tracked features. 
Second, using the triangulated 3D points in the KF, we perform an $\ell_1$-norm PnP to obtain a robust pose estimate and remove the 10\% of points with the largest reprojection errors.
New KFs are strategically inserted based on two primary criteria: tracking number and geometric quality. The system monitors both the number of successfully tracked features and the average feature displacement to determine KF insertion. When a new KF is created, additional features are detected using a uniform grid sampling strategy to maintain consistent feature distribution across the image, ensuring sufficient features for optical flow tracking in subsequent frames.

\section{Back end} \label{back_end}

\subsection{Triangulation and uncertainty estimation} \label{triangulation_and_uncertainty_est}
As illustrated in Figure~\ref{triangulation_pnp}, after feature tracking and outlier rejection, we obtain point correspondences among the KF and the left image of the CF, denoted as $\{x_i,y_i,z_i\}_{i=1}^{n}$, with $x_i$ in the right image of the KF, $y_i$ in the left image of the KF, and $z_i$ in the left image of the CF. Point coordinates are represented as normalized image coordinates. We suppose the error of feature matching follows a Gaussian distribution $\mathcal N(0,\sigma^2 I_2)$. In other words, we have 
\begin{equation*}
    x_i  = x_i^o+\epsilon_{x_i},~~y_i  = y_i^o+\epsilon_{y_i},
\end{equation*}
where $\epsilon_{x_i},\epsilon_{y_i} \sim \mathcal N(0,\sigma^2 I_2)$ are feature matching errors.
The isotropic error assumption is a good approximation for SLAM with range and bearing observations, including the pinhole camera model~\cite{zhao2021feature}. This assumption has been widely adopted in many existing studies~\cite{zhao2021feature,rosen2019se,wang2015dimensionality}.
According to Theorem 1 in~\cite{zeng2024consistent}, a consistent estimator $\hat \sigma^2$ for $\sigma^2$ can be constructed by solving an eigenvalue problem. This means that as the number of points increases, the 2D matching uncertainty estimate converges to the true value.

\begin{figure}[!htbp]
    \centering
    \includegraphics[width=0.86\linewidth]{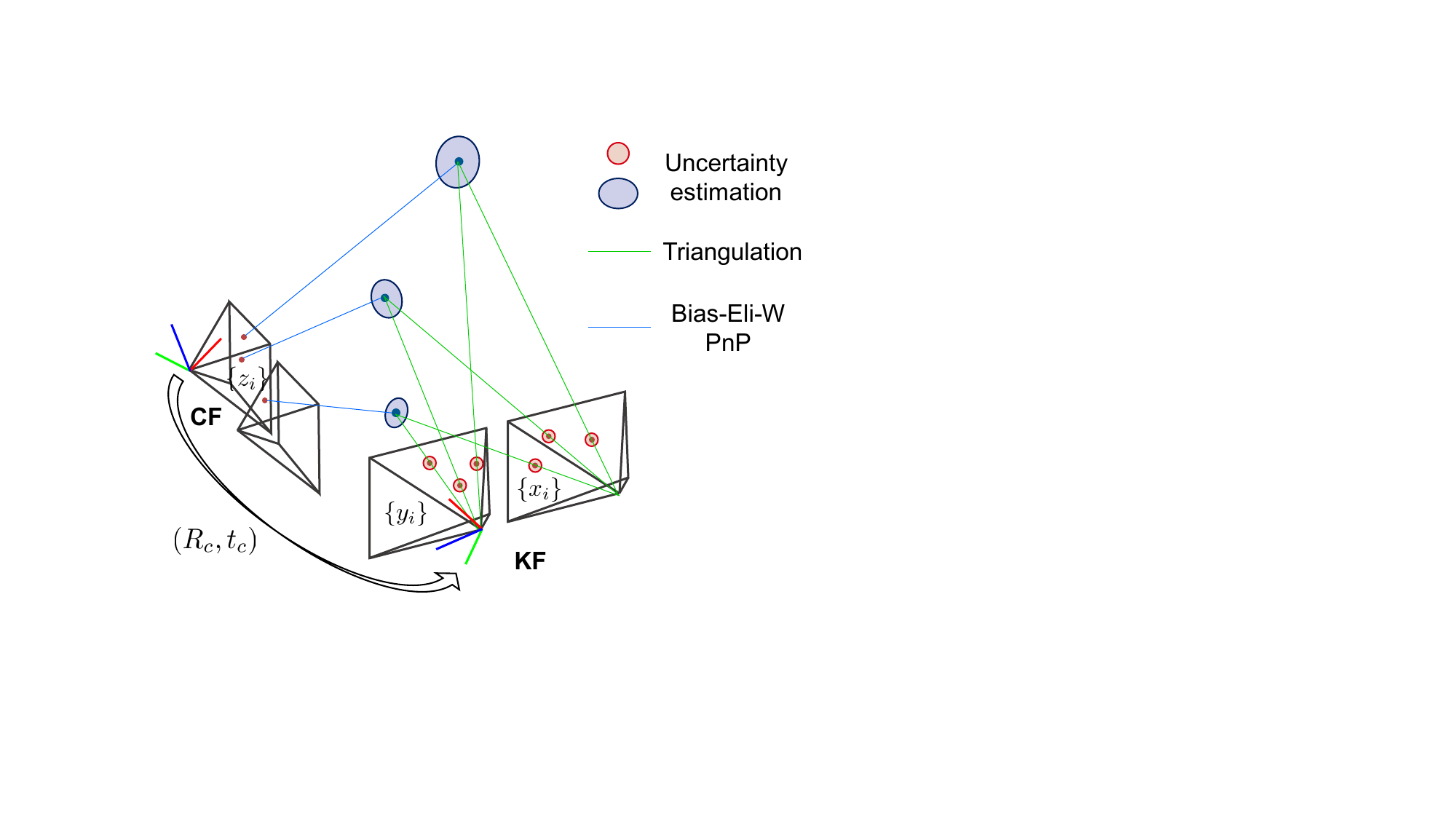}
    \caption{Illustration of frame tracking. The orange circles represent feature-matching uncertainties and the blue ellipses denote the triangulation uncertainties.}
    \label{triangulation_pnp}
\end{figure}

For triangulation, instead of the commonly used parallax-based triangulation~\cite{campos2021orb,ferrera2021ov} or singular-value-decomposition-based methods~\cite{hartley2003multiple}, we employ a linear least-squares closed-form solution, which is more general and better suited for uncertainty analysis. Specifically, suppose the coordinates of the 3D point in the left KF is $p_i$. We use $x_i^h$ and $y_i^h$ to denote the homogeneous coordinates. In the noise-free case, we have $y_i^{ho}$ parallel to $p_i$ and $x_i^{ho}$ parallel to $R_0^\top p_i-R_0^\top t_0$. Then, by involving errors, we obtain $(y_i^h-\epsilon_{y_i}) \times p_i=0$ and $(x_i^h-\epsilon_{x_i}) \times (R_0^\top p_i-R_0^\top t_0)=0$, where $(R_0,t_0)$ is the stereo baseline.
Letting \begin{equation*}
    A_i=\begin{bmatrix}
        y_i^{h \wedge} \\
        x_i^{h \wedge} R_0^\top
    \end{bmatrix},~~ b_i=\begin{bmatrix}
            0_3 \\
            x_i^{h \wedge} R_0^\top t_0
        \end{bmatrix},
\end{equation*}
where $(\cdot)^\wedge$ denotes the function that maps a vector to a skew-symmetric matrix, the above parallel geometric constraints have a compact form of $b_i=A_i p_i+\varepsilon_i$, in which $\varepsilon_i$ is the error term.
As a result, the least-squares triangulation formula is 
\begin{equation} \label{pt_triangulation}
    p_i=(A_i^\top A_i)^{-1} A_i^\top b_i.
\end{equation}

Since $A_i$ and $b_i$ contain noises $\epsilon_{x_i}$ and $\epsilon_{y_i}$, the exact uncertainty $\Sigma_i$ of $p_i$ is hard to obtain. We use the following uncertainty propagation formula to estimate:
\begin{equation} \label{3D_pt_uncertainty}
    \Sigma_i = J_{p_i} \Sigma J_{p_i}^\top,
\end{equation}
where $J_{p_i}$ is the Jacobian matrix of $p_i$ with respect to $\epsilon_i \triangleq [\epsilon_{x_i}^\top,\epsilon_{y_i}^\top]^\top$, and $\Sigma=\hat \sigma^2 I_4$ is the consistent estimate of the covariance of $\epsilon_i$. 
The error propagation in ~\eqref{3D_pt_uncertainty} is based on a first-order approximation. When the noise is sufficiently small, higher-order terms become negligible, allowing it to converge to the true covariance.
We observe that the adopted front end produces small feature-matching errors. For instance, the estimated noise standard deviation in the KITTI dataset is approximately $0.2$ pixels (see Section~\ref{experiments_real}). As a result, $\Sigma_i$ in~\eqref{3D_pt_uncertainty} can be considered highly accurate.
When the noise level is large, the first-order approximation no longer gives accurate results. In this case, the simulation-extrapolation, a simulation-based method for measurement error models developed in the statistics literature can be applied~\cite{cook1994simulation,stefanski1995simulation}. Further extension of the method for computational efficiency and examination of its validity are left for future research.

As illustrated in Figure~\ref{triangulation_pnp}, the uncertainty of a triangulated 3D point is heavily influenced by its depth. In general, the farther the point, the greater the uncertainty. When a point is too far, the stereo camera functionally degenerates into a monocular camera, losing the observability of the scale of the translation~\cite{campos2021orb}. By incorporating the estimated uncertainty, we can effectively reduce the impact of distant points.

\subsection{Bias-eliminated weighted PnP} \label{consistent_pnp}
On top of the triangulation, we use the 2D coordinates of the matched points in the CF to estimate the pose of the CF with respect to the KF, denoted as $(R_c,t_c)$. 
Let $R_c=[r_1,r_2,r_3]^\top$, $t_c=[t_1,t_2,t_3]^\top$, and $\theta \triangleq \alpha[r_3^\top,r_1^\top,t_1,r_2^\top,t_2]^\top$, where $\alpha$ is a positive scale factor. By referring to (12) in~\cite{zeng2023cpnp}, we obtain a least-squares estimate for $\theta$ as follows:
\begin{equation} \label{biased_weighted_pnp}
    \hat \theta^{\rm B} = (H^\top H)^{-1} H^\top d, 
\end{equation}
where $d=[z_1^\top,\ldots,z_n^\top]^\top$,
\begin{equation*}
    H=\begin{bmatrix}
        -z_1 \otimes (p_1-\bar p)^\top & I_2 \otimes [p_1^\top,1] \\
        \vdots & \vdots \\
        -z_n \otimes (p_n-\bar p)^\top & I_2 \otimes [p_n^\top,1]
    \end{bmatrix},
\end{equation*}
and $\bar p=\sum_{i=1}^{n}p_i/n$.  

The work in~\cite{zeng2023cpnp} does not estimate and account for 3D point uncertainties. In our scenario, the triangulated point $p_i$ is affected by noise and associated with a covariance matrix $\Sigma_i$. As a result, the regressor matrix $H$ is correlated with the noise terms. According to estimation theory, this correlation generally renders the estimator $\hat{\theta}^{\rm B}$ neither asymptotically unbiased nor consistent~\cite{mu2017globally}.
The fundamental issue lies in the deviation of the mean of the term $p_i p_i^\top$, which is a component of $H^\top H$, from its noise-free counterpart $p_i^o p_i^{o\top}$. This deviation, quantified by $\Sigma_i$, introduces bias into the estimation process.
To eliminate the estimator bias, we need to subtract $\Sigma_i$ from $p_i p_i^\top$. This leads to the bias-eliminated solution 
\begin{equation} \label{unbiased_weighted_pnp}
    \hat \theta^{\rm BE} = \left(\frac{H^\top H}{n}-G \right)^{-1} \frac{H^\top d}{n}, 
\end{equation}
where 
$G=\frac{1}{n}\sum_{i=1}^{n} G_i^\top G_i$ with $G_i=[-z_i \otimes \Sigma_i^{\frac{1}{2}},I_2 \otimes [\Sigma_i^{\frac{1}{2}},0_{3 \times 1}] \,]$.
The recovery of $\hat R_c^{\rm BE}$ and $\hat t_c^{\rm BE}$ from $\hat \theta^{\rm BE}$ can refer to (14)-(17) in~\cite{zeng2023cpnp}.
Our bias-eliminated estimator $(\hat R_c^{\rm BE},\hat t_c^{\rm BE})$ owns the property of consistency. Before deriving the formal theorem, we introduce the definition of $\sqrt{n}$-consistency.
\begin{definition}[$\sqrt{n}$-Consistency in Probability]
     An estimator $\hat {\bm \gamma}$ is called a $\sqrt{n}$-consistent estimator of ${\bm \gamma}^o$ if $\hat {\bm \gamma}-{\bm \gamma}^o=O_p(1/\sqrt{n})$, i.e., for any $\varepsilon >0$, there exists a finite $M$ and a finite $N$ such that for any $n>N$, $\mathbb{P} (\|\sqrt{n}(\hat {\bm\gamma} -{\bm\gamma}^o)\|>M )<\varepsilon$.
\end{definition}
 The notion of ``$\sqrt{n}$-consistency'' includes two implications: The estimator is consistent, i.e., it converges to the true value as $n$ increases; The convergence is as fast as $1/\sqrt{n}$.

\begin{theorem} \label{consistency_be_est}
    The bias-eliminated estimator $(\hat R_c^{\rm BE},\hat t_c^{\rm BE})$ is $\sqrt{n}$-consistent, i.e., $\hat R_c^{\rm BE}-R_c^o=O_p(1/\sqrt{n}$), $\hat t_c^{\rm BE}-t_c^o=O_p(1/\sqrt{n})$.
\end{theorem}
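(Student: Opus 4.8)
\noindent\emph{Proof sketch.} The plan is to first establish the $\sqrt n$-consistency of the intermediate least-squares vector $\hat\theta^{\rm BE}$ of \eqref{unbiased_weighted_pnp} by an errors-in-variables / method-of-moments argument, and then transfer the rate to the pose. The transfer is immediate: the recovery of $(R_c,t_c)$ from $\theta$ in (14)--(17) of \cite{zeng2023cpnp} --- rescaling by the norms of the estimated rotation rows and projecting the result onto $SO(3)$ by SVD --- is continuously differentiable in a neighbourhood of any $\theta$ corresponding to a genuine rotation (the $SO(3)$ projection is smooth wherever the singular values stay bounded away from $0$), so by the delta method $\hat\theta^{\rm BE}-\theta^o=O_p(1/\sqrt n)$ yields $\hat R_c^{\rm BE}-R_c^o=O_p(1/\sqrt n)$ and $\hat t_c^{\rm BE}-t_c^o=O_p(1/\sqrt n)$.

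Next I would record the exact noise-free identity: by the construction behind (12) of \cite{zeng2023cpnp}, the noise-free quantities satisfy $d^o=H^o\theta^o$ (which also pins the scale $\alpha$ to the reciprocal of the mean point depth), so $\theta^o=(H^{o\top}H^o)^{-1}H^{o\top}d^o$. Assuming the usual PnP regularity --- the true features lie in a bounded region and form a non-degenerate configuration --- the matrix $M_n^o\triangleq\frac1n H^{o\top}H^o$ has eigenvalues bounded above and bounded away from $0$ uniformly in $n$; this makes $\theta^o$ identifiable and, with high probability for large $n$, makes $\frac1n H^\top H-G$ invertible with a uniformly bounded inverse.

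The core step is the moment analysis. Writing $p_i=p_i^o+\delta_i$ and Taylor-expanding the closed-form triangulation \eqref{pt_triangulation} gives $\delta_i=\delta_i^{(1)}+\rho_i$, where $\delta_i^{(1)}$ is linear in $(\epsilon_{x_i},\epsilon_{y_i})$ with $\mathbb E[\delta_i^{(1)}\delta_i^{(1)\top}]=\Sigma_i$ and $\rho_i$ collects higher-order noise terms. Substituting this into $H$ and into $G_i$ turns $\frac1n H^\top H-G$ and $\frac1n H^\top d$ into averages of independent, non-identically-distributed summands, and the heart of the argument is to show that $\mathbb E[\frac1n H^\top H-G]=M_n^o$ and $\mathbb E[\frac1n H^\top d-(\frac1n H^\top H-G)\theta^o]=0$ up to remainders of higher order in the noise. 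This is exactly what $G=\frac1n\sum_i G_i^\top G_i$ accomplishes: the only blocks of $\frac1n H^\top H$ whose means depart from their noise-free values at leading order are those carrying $p_ip_i^\top$, for which $\mathbb E[p_ip_i^\top]=p_i^op_i^{o\top}+\Sigma_i$ up to higher-order terms, and $G_i^\top G_i$ --- with $G_i=[-z_i\otimes\Sigma_i^{1/2},\,I_2\otimes[\Sigma_i^{1/2},0_{3\times1}]]$ --- is precisely the outer-product matrix that reproduces those blocks with $p_i$ replaced by a $\Sigma_i^{1/2}$-weighting (carrying the same $z_i$ factors), so the subtraction removes the leading bias block by block; if the current-frame coordinates $z_i$ also carry matching noise, one further checks that its bias contribution cancels between the regressor block $-z_i\otimes(p_i-\bar p)^\top$ and the regressand $z_i$, since both share the same $\epsilon_{z_i}$. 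The remainders left uncancelled are of smaller order in the noise than $\Sigma_i$, hence negligible in the small-noise regime discussed after \eqref{3D_pt_uncertainty} (and exactly zero under the surrogate model $p_i\sim\mathcal N(p_i^o,\Sigma_i)$). With the summands centred, independence and uniform second-moment bounds (from the boundedness assumed above) give variances of order $1/n$, so a Chebyshev bound --- together with a Lindeberg CLT for the sharp rate --- yields $\frac1n H^\top H-G=M_n^o+O_p(1/\sqrt n)$ and $\frac1n H^\top d=M_n^o\theta^o+O_p(1/\sqrt n)$. Inverting, $\hat\theta^{\rm BE}-\theta^o=(\frac1n H^\top H-G)^{-1}[\frac1n H^\top d-(\frac1n H^\top H-G)\theta^o]=O_p(1/\sqrt n)$, and the delta-method reduction completes the proof.

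The hard part is this moment bookkeeping. Because the triangulated $p_i$ is a nonlinear, non-Gaussian, and not-exactly-unbiased function of the raw feature noise, one must expand \eqref{pt_triangulation} carefully, trace how this noise propagates into every block of $H^\top H$ and $H^\top d$, confirm that $G$ removes the \emph{entire} leading-order bias rather than only a piece of it, and bound each neglected remainder uniformly in $i$ --- all while verifying the non-degeneracy condition that keeps $M_n^o$ well conditioned and the summand moments bounded. Once that is in place, the remaining ingredients --- law of large numbers, central limit theorem, matrix-inverse perturbation, and the delta method --- are routine.
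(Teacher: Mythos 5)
Your proposal follows essentially the same route as the paper's proof: establish $\sqrt{n}$-consistency of $\hat\theta^{\rm BE}$ by showing that $\frac{1}{n}H^\top H-G$ and $\frac{1}{n}H^\top d$ converge to their noise-free counterparts at rate $O_p(1/\sqrt n)$ --- the first-order noise terms averaging out by a law-of-large-numbers/Chebyshev argument and $G$ cancelling the leading second-order bias from $\mathbb E[p_ip_i^\top]=p_i^op_i^{o\top}+\Sigma_i$ --- followed by a matrix-inverse perturbation and continuity of the $\theta\mapsto(R_c,t_c)$ recovery. Your version is somewhat more explicit than the paper's about the regularity conditions (non-degeneracy of $\frac1n H^{o\top}H^o$) and about the fact that the triangulated $p_i$ is only approximately unbiased with covariance $\Sigma_i$ (so the cancellation is exact only up to higher-order-in-noise remainders), but these are refinements of, not departures from, the published argument.
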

\begin{proof}
     The proof is mainly based on the following lemma:
            \begin{lemma}{(\cite{zeng2022global}, Lemma 4):}\label{Lemma_1}
            Let $\{X_i\}$ be a sequence of independent random variables with $\mathbb{E}[X_i]=0$ and uniformly bounded $\mathbb{E}[X_i^2]$ for all $i$. Then, there holds $\sum_{i=1}^n X_i/n = O_p(1/\sqrt{n})$.
		\end{lemma}
    In the noise-free case, $(H^{o\top} H^o)^{-1} H^{o\top} d$, or equivalently $(\frac{H^{o\top} H^o}{n})^{-1} \frac{H^{o\top} d}{n}$ results in the ground truth $\theta^o$. The idea of the proof is to show that $\frac{H^{\top} H}{n}-G$ and $\frac{H^{\top} d}{n}$ converge to $\frac{H^{o\top} H^o}{n}$ and $\frac{H^{o\top} d}{n}$, respectively. Let $\Delta H=H-H^o$. 
    First, since $\Delta H$ only contains the first-order term of 3D point noise $\epsilon_{p_i}$, which has zero mean and finite covariance, based on Lemma~\ref{Lemma_1}, we have
    \begin{equation} \label{eqn:Op1}
        \frac{H^{\top} d}{n}-\frac{H^{o\top} d}{n}=\frac{\Delta H^{\top} d}{n}=O_p(\frac{1}{\sqrt{n}}).
    \end{equation}
    Second, $\Delta H^\top \Delta H$ contains both the first and second-order terms of $\epsilon_{p_i}$. For the second-order term $\epsilon_{p_i} \epsilon_{p_i}^\top$, we can subtract it by $\Sigma_i$ to achieve zero mean. It can be verified that $-G$ actually accomplishes this procedure. Hence, according to Lemma~\ref{Lemma_1}, it holds that 
    \begin{equation} \label{eqn:Op2}
        \begin{split}
             \frac{H^{\top} H}{n}-\frac{H^{o\top} H^{o}}{n} & =\frac{\Delta H^{\top} \Delta H}{n}+O_p(\frac{1}{\sqrt{n}}) \\
        & = G + O_p(\frac{1}{\sqrt{n}}).
        \end{split}
    \end{equation} 
    Finally, by combining~\eqref{eqn:Op1} and~\eqref{eqn:Op2}, we obtain
    \begin{align*}
        \hat \theta^{\rm BE} & = \left(\frac{H^{o\top} H^o}{n}+O_p(\frac{1}{\sqrt{n}})\right)^{-1} \left(\frac{H^{o\top} d}{n} +O_p(\frac{1}{\sqrt{n}})\right) \\
        & = \left(\frac{H^{o\top} H^o}{n}\right)^{-1} \frac{H^{o\top} d}{n} +O_p(\frac{1}{\sqrt{n}}) = \theta^o + O_p(\frac{1}{\sqrt{n}}).
    \end{align*}
    That is, $\hat \theta^{\rm BE}$ is a $\sqrt{n}$-consistent estimator of $\theta^o$. Since the recovery of $(\hat R_c^{\rm BE},\hat t_c^{\rm BE})$ from $\hat \theta^{\rm BE}$ involves only continuous functions, see equations (14)-(17) in~\cite{zeng2023cpnp}, $(\hat R_c^{\rm BE},\hat t_c^{\rm BE})$ is also $\sqrt{n}$-consistent estimator of $(R_c^{o},t_c^{o})$, which completes the proof.
\end{proof}

Note that in the bias-eliminated estimator~\eqref{unbiased_weighted_pnp}, the constraints on the rotation matrix are not applied. As a result, although the estimator is consistent, it may not own the minimum variance. Hence, we further take $(\hat R_c^{\rm BE},\hat t_c^{\rm BE})$ as the initial value and perform a weighted PnP iterative refinement. 
Let $h(\cdot)$ denote the pinhole camera projection model, that is, for a $p \in \mathbb R^3$, $h(p)=[p]_{1:2}/[p]_3$. The weight assigned for the $i$-th point is $\bar \Sigma_i^{-\frac{1}{2}}$, where $\bar \Sigma_i=J_{h_i} \Sigma_i J_{h_i}^\top$, and $J_{h_i}$ is the Jacobian matrix of $h(\hat R_c^{\rm BE} p_i+\hat t_c^{\rm BE})$ with respect to $p_i$.
As a result, the weighted residual for the $i$-th point is $Re_i=\bar \Sigma_i^{-\frac{1}{2}}\left(h(R_c p_i +t_c)-z_i \right)$, and we solve the following problem:
\begin{equation} \label{pnp_gn}
    (\hat R_c,\hat t_c) = \mathop{\rm arg~min}\limits_{(R_c,t_c)} \sum_{i=1}^{n} \rho_{\delta}\left(Re_{i}\right),
\end{equation}
where $\rho_{\delta}$ is the truncated least-squares (TLS) robust kernel function. Problem~\eqref{pnp_gn} is solved using an LM algorithm. 
Thanks to the consistency of the initial value $(\hat R_c^{\rm BE},\hat t_c^{\rm BE})$ and the quadratic convergence speed of the LM algorithm near the global minimum, when the point number $n$ is large, a single LM iteration is sufficient to achieve the minimum variance, which is computationally efficient~\cite{zeng2024consistent}.

\subsection{Epipolar bundle adjustment} \label{epipolar_BA}
When a new KF is generated, we perform a local BA, which includes the latest two KFs and intermediate ordinary frames (OFs). As shown in Figure~\ref{fig_epipolar_ba}, the parameters to be refined are six-degree relative poses $\xi_k \in \mathbb R^6, k=1,\ldots,K+1$. Each $\xi_k$ consists of three Euler angles and a translation vector. Since feature tracking in OFs only involves left images, their right images are not utilized in the local BA. 
It is worth noting that most approaches simultaneously optimize camera poses and the coordinates of visible 3D points in local BA by imposing reprojection constraints. However, in the context of odometry, the 3D points are latent variables that are not of primary interest. Including them in the optimization process may not necessarily improve pose estimation accuracy, but it does introduce additional computational overhead.
As an alternative, the epipolar constraint does not involve 3D points and directly connects two matched feature points through the relative pose. 
Suppose the relative pose is $(R,t)$, then, for two matched feature points $x$ and $y$, the epipolar  constraint is
\begin{equation} \label{epipolar_constraint}
    y^{h \top}E x^h=0,
\end{equation}
where $E = t^\wedge R$ is the essential matrix. 
Let $l = E x^h$, referred to as the epipolar line. Following SOFT2~\cite{cvivsic2022soft2}, we use the point-to-epipolar-line distance as the residual:
\begin{equation} \label{epipolar_residual}
    Re=\frac{y^{h \top} l}{\|[l]_{1:2}\|}.
\end{equation}

In~\cite{cvivsic2022soft2}, only the left images from the frames within the sliding window are involved in the BA optimization. This approach, which relies solely on temporal rigidity without baseline constraint, cannot refine the scale. In this paper, by additionally incorporating rigidity induced from the baseline (using the right images of keyframes), simultaneous refinement of all six degrees of freedom can be achieved.

Let $\xi=[\xi_1^\top,\ldots,\xi_{K+1}^\top]^\top$. We search for epipolar constraints for any pair of images involved and solve the following problem:
\begin{equation} \label{epipolar_ba}
    \hat \xi = \mathop{\rm arg~min}\limits_{\xi} \sum_{j \in \mathcal E} \sum_{i=1}^{n_j} \rho_{\delta}\left(Re_{i,j}\right),
\end{equation}
where $\mathcal E$ is the set of all image pairs, and $n_j$ is the number of feature matches in the $j$-th pair. Problem~\eqref{epipolar_ba} is solved using an LM algorithm. 

\begin{figure}[!htbp]
    \centering
    \includegraphics[width=0.8\linewidth]{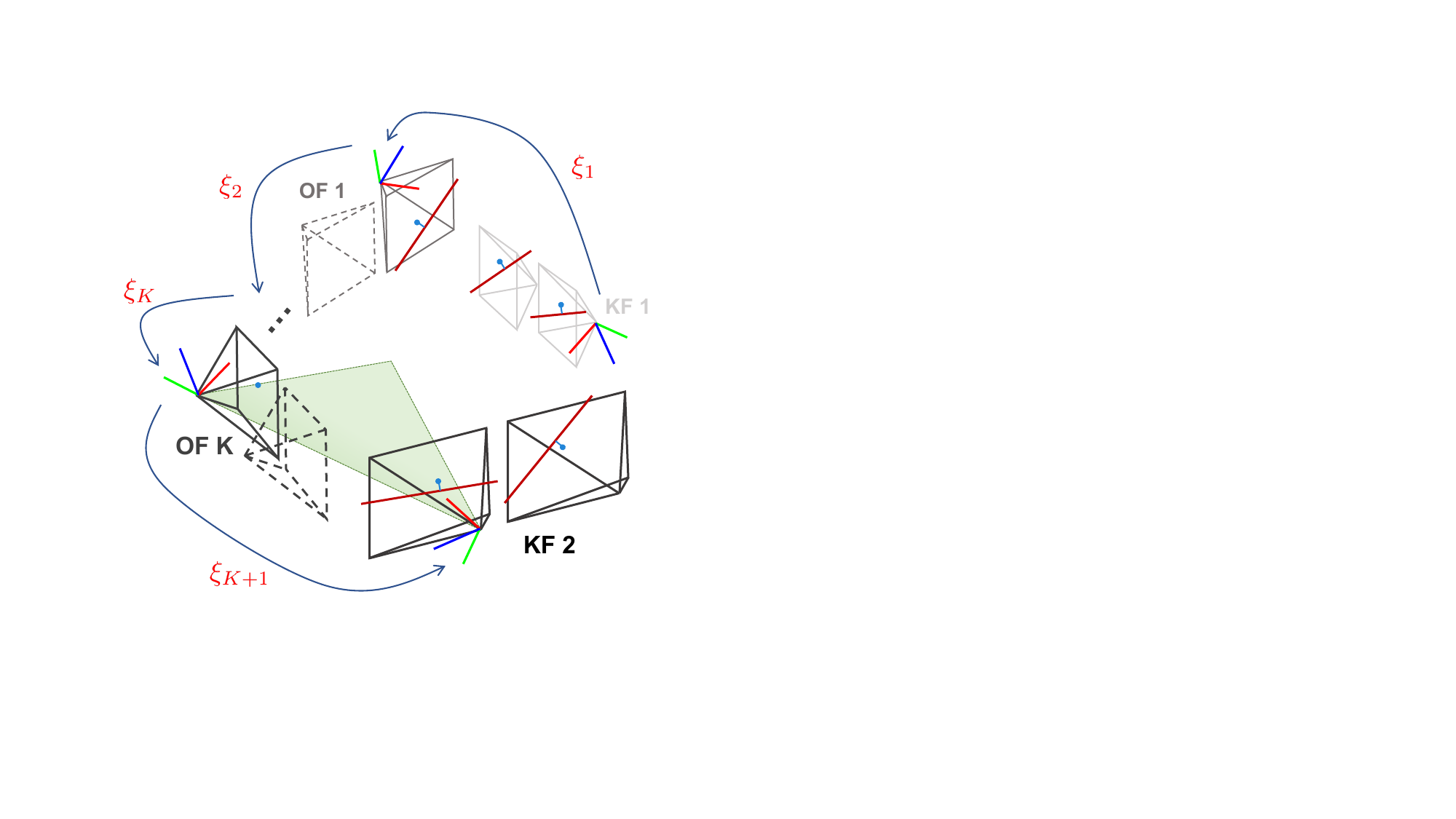}
    \caption{Illustration of epipolar BA. For two KFs, we utilize stereo images, while for OFs, we only use the left image. Epipolar constraints for each pair of images with matched features are involved in BA, and point-to-epipolar-line distances are optimized. }
    \label{fig_epipolar_ba}
\end{figure}

\section{Simulations and Experiments} \label{experiments}

\subsection{Simulation} \label{simulation}
We perform simulations for three purposes: 1) verifying our bias-eliminated PnP estimator is consistent (Theorem~\ref{consistency_be_est}); 2) proving that only using 3D points triangulated from the current KF leads to better odometry accuracy; 3) showing the effect of the epipolar BA. 
Throughout the simulations, we set the baseline of the stereo camera as $R=I$ and $t=[0.5,0,0]^\top$ m. We suppose the camera can see 3D points with depth within $[1,40]$ m. The focal length is set as $f=800$ pixels, and the principle point offset is $u_0=320$ pixels, $v_0=240$ pixels.
Unless otherwise specified, the standard deviation of feature-matching noises is 1 pixel.

\subsubsection{Consistency of bias-eliminated PnP} Recall that PnP tracking estimates the relative pose between CF and the latest KF. We let the number of feature correspondences vary from 30, 60, 120, 240, 480, and 960, respectively. For each case, we perform 1000 Monte Carlo tests to calculate the root mean square error (RMSE), where the pose and feature positions are randomly generated in each test. We let $\sigma=0.5,1$ pixel respectively. The result is plotted in Figure~\ref{fig_consistency}. The RMSEs of noise std, rotation, and translation all decrease linearly with respect to the feature point number in the double-log plot, verifying the consistency of our estimator. The consistent property ensures that our PnP tracking serves as a good initial value for the subsequent iterative refinement as long as the feature is rich.

\begin{figure}[htbp]
    \centering
    \begin{subfigure}[t]{0.24\textwidth}
        \centering
        \includegraphics[width=\textwidth]{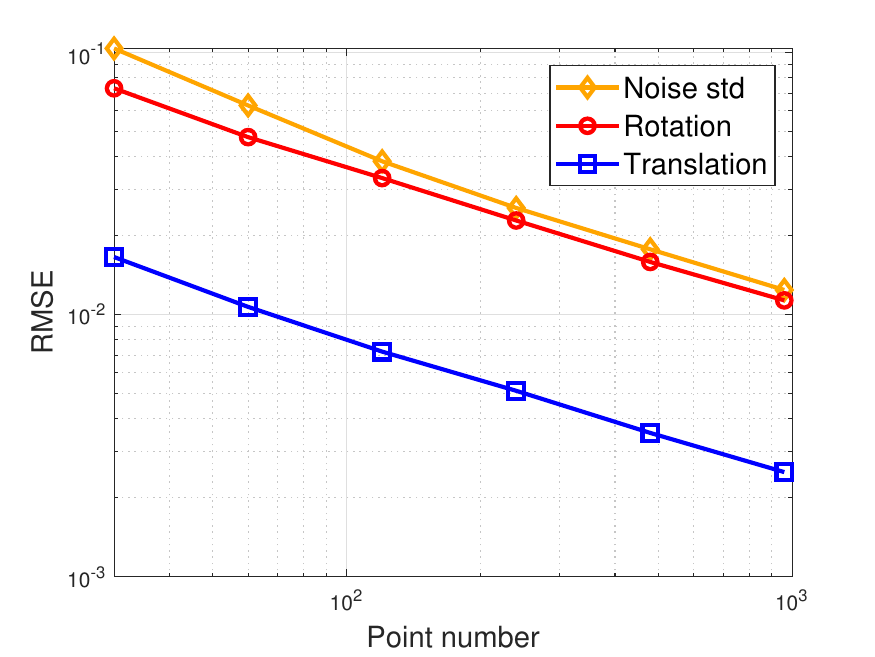} 
        \caption{$\sigma=0.5$ pixel}
        \label{fig_consistency1}
    \end{subfigure}
    \hfill
    \begin{subfigure}[t]{0.24\textwidth}
        \centering
        \includegraphics[width=\textwidth]{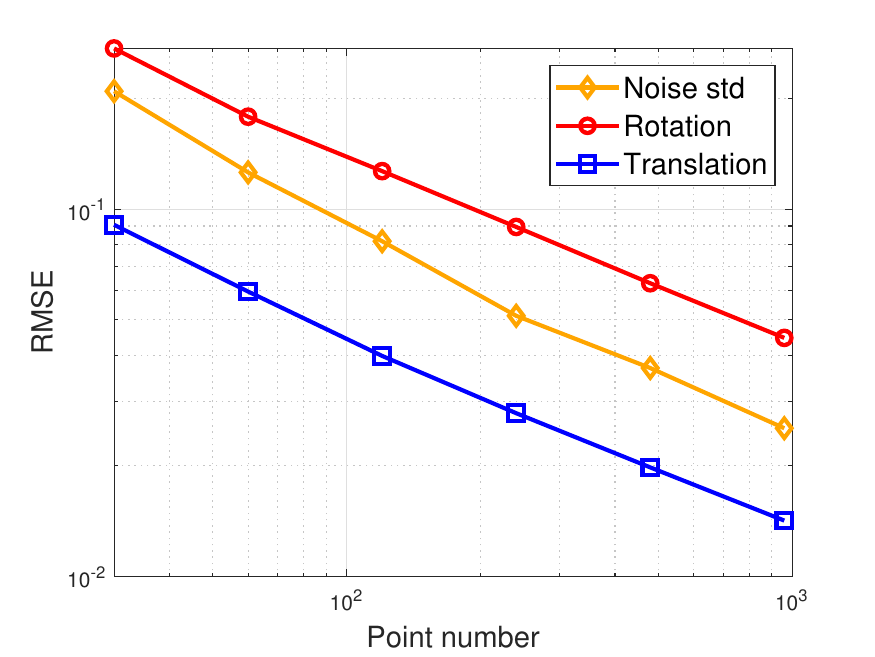} 
        \caption{$\sigma=1$ pixel}
        \label{fig_consistency2}
    \end{subfigure}
    \caption{Consistency of the bias-eliminated PnP estimator. The units of the RMSEs of noise std, rotation, and translation are pixel, $^\circ$, and m.}
    \label{fig_consistency}
\end{figure}

\subsubsection{Rationality of utilizing the latest KF for pose tracking} We deploy two trajectories: a straight line and a circle. Each trajectory contains 500 frames. With no loss of generality, each frame is KF. 3D points are uniformly and randomly distributed in the space such that around 100-200 points are visible in each image. The outlier ratio is set as 2\%. For the compared methods, we suppose they can use the latest $m=2,3$ KFs, respectively. BA is not included in this simulation. We run 10 Monte Carlo tests to calculate the average relative pose error (RPE) and absolute trajectory error (ATE)~\cite{zhang2018tutorial}. The result is shown in Table~\ref{table_different_num_kf}. We also plot the average pose error of each frame for the line trajectory in Figure~\ref{fig_ave_rel_pose_err}.
We see that using the latest KF owns the best accuracy, while integrating more 3D points from the previous three KFs instead exhibits the least precision, especially for the RPE. 
The pose errors of using more KFs also fluctuate more severely. These results demonstrate that the errors of 3D points propagated from previous pose errors will offset the advantages gained from increased feature number, and can even result in deteriorated odometry performance. 

\begin{table}[!htbp]
            \centering
       \caption{Average errors of using different numbers of KFs for PnP tracking. The units for errors of $t$ and $R$ are m and $^\circ$.}
            \label{table_different_num_kf}
            \begin{tabular}{cccccc}
           \thickhline
             \multirow{2}{*}{\textbf{Trajectory} } & \multirow{2}{*}{\textbf{KF used}} & \multicolumn{2}{c}{\textbf{ATE}}   & \multicolumn{2}{c}{\textbf{RPE}}\\\cline{3-6}
              & & $t$ & $R$  & $t$ & $R$\\
            \hline
            
            \multirow{4}{*}{Line} & latest & 1.161  &2.452 & 0.046 & 0.048\\ 
            & two & 2.193 &  0.947&  0.176&  0.175\\
             & three &  2.858&  1.859&  0.170&  0.163\\
             & latest+BA &  1.068&  1.307&  0.044&  0.057\\
            \hline

            \multirow{4}{*}{Circle} & latest &   20.313& 6.379&  0.084& 0.084\\ 
            & two &  55.840&  18.784&  0.289&  0.316\\
             & three &  63.209&  20.355&  0.387&  0.395\\
           &latest+BA &  9.415&  1.398&  0.170&  0.080\\
           \thickhline
            \end{tabular}
        \end{table}

        \begin{figure}[!htbp]
    \centering
    \includegraphics[width=0.9\linewidth]{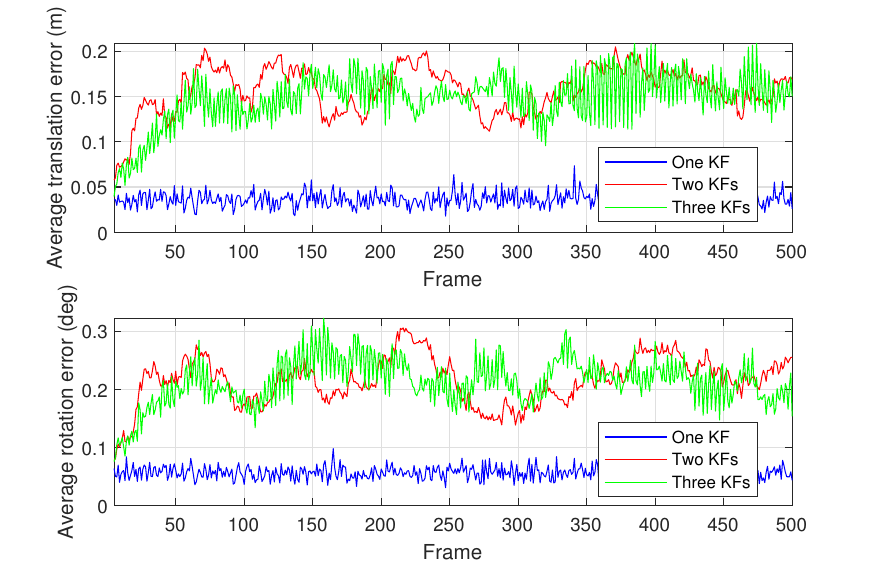}
    \caption{Average pose error of each frame when using different numbers of KFs for the line trajectory.}
    \label{fig_ave_rel_pose_err}
\end{figure}

\begin{table*}[htbp]
\centering
\small
\caption{Comparison of ATE and RPE across different sequences in KITTI dataset. ORB3 denotes ORB-SLAM3 and OV2 represents OV\textsuperscript{2}SLAM. Values highlighted in {\color{blue} \textbf{blue bold}} represent the smallest, and values in {\color{blue!80} blue} denote the second smallest.}
\label{tab:KITTI comparison}
\setlength{\tabcolsep}{6pt}
\begin{tabular}{c|ccc|ccc|>{\hspace{2pt}}c>{\hspace{2pt}}c>{\hspace{2pt}}c|>{\hspace{2pt}}c>{\hspace{2pt}}c>{\hspace{2pt}}c}
\thickhline
\multirow{3}{*}{Sequence} & \multicolumn{6}{c|}{ATE (m)} & \multicolumn{6}{c}{RPE (m)} \\
\cline{2-13}
 & \multicolumn{3}{c|}{Color} & \multicolumn{3}{c|}{Grayscale} & \multicolumn{3}{c|}{Color} & \multicolumn{3}{c}{Grayscale} \\
\cline{2-13}
 & ORB3 & OV2 & Ours & ORB3 & OV2 & Ours & ORB3 & OV2 & Ours & ORB3 & OV2 & Ours \\
\hline
seq00 & {\color{blue}\bfseries 4.042} & 4.676 & {\color{blue!80}4.174} & {\color{blue}\bfseries 4.263} & 4.767 & {\color{blue!80}4.514} & 0.0287 & {\color{blue!80}0.0278} & {\color{blue}\bfseries 0.0262} & 0.0283 & {\color{blue!80}0.0262} & {\color{blue}\bfseries 0.0260} \\
seq02 & {\color{blue!80}9.549} & 11.406 & {\color{blue}\bfseries 5.756} & 7.900 & {\color{blue!80}7.363} & {\color{blue}\bfseries 3.900} & 0.0286 & {\color{blue!80}0.0278} & {\color{blue}\bfseries 0.0257} & 0.0277 & {\color{blue!80}0.0263} & {\color{blue}\bfseries 0.0257} \\
seq03 & {\color{blue!80}3.846} & 4.183 & {\color{blue}\bfseries 0.551} & 1.200 & {\color{blue!80}1.177} & {\color{blue}\bfseries 1.030} & {\color{blue!80}0.0250} & 0.0264 & {\color{blue}\bfseries 0.0148} & 0.0182 & {\color{blue!80}0.0166}  & {\color{blue}\bfseries 0.0158} \\
seq04 & {\color{blue!80}3.160} & 3.453 & {\color{blue}\bfseries 2.328} & {\color{blue}\bfseries 0.213} & 1.306 & {\color{blue!80}0.726} & {\color{blue!80}0.0445} & 0.0487 & {\color{blue}\bfseries 0.0353} & {\color{blue!80}0.0198} & 0.0239 & {\color{blue}\bfseries 0.0197} \\
seq05 & {\color{blue!80}3.904} & 4.254 & {\color{blue}\bfseries 3.332} & {\color{blue}\bfseries 2.115} & 2.448 & {\color{blue!80}2.403} & {\color{blue!80}0.0264} & 0.0267 & {\color{blue}\bfseries 0.0178} & 0.0166 & {\color{blue!80}0.0163} & {\color{blue}\bfseries 0.0124} \\
seq06 & {\color{blue!80}4.279} & 5.052 & {\color{blue}\bfseries 2.400} & {\color{blue}\bfseries 1.791} & 3.533 & {\color{blue!80}1.859} & {\color{blue!80}0.0360} & 0.0363 & {\color{blue}\bfseries 0.0187} & {\color{blue!80}0.0174} & 0.0183 & {\color{blue}\bfseries 0.0138} \\
seq07 & {\color{blue!80}1.991} & 2.226 & {\color{blue}\bfseries 1.593} & {\color{blue}\bfseries 1.222} & 1.621 & {\color{blue!80}1.281} & 0.0235 & {\color{blue!80}0.0213} & {\color{blue}\bfseries 0.0175} & 0.0166 & {\color{blue!80}0.0124} & {\color{blue}\bfseries 0.0123} \\
seq08 & {\color{blue!80}6.201} & 6.315 & {\color{blue}\bfseries 5.866} & 3.698 & {\color{blue!80}3.590} & {\color{blue}\bfseries 3.430} & 0.0439 & {\color{blue!80}0.0431} & {\color{blue}\bfseries 0.0397} & {\color{blue!80}0.0389} & {\color{blue}\bfseries 0.0380} & 0.0392 \\
seq09 & 6.598 & {\color{blue!80}6.529} & {\color{blue}\bfseries 5.245} & {\color{blue!80}3.193} & 3.760 & {\color{blue}\bfseries 2.169} & {\color{blue!80}0.0324} & 0.0327 & {\color{blue}\bfseries 0.0234} & {\color{blue!80}0.0232} & 0.0249 & {\color{blue}\bfseries 0.0181} \\
seq10 & 4.477 & {\color{blue!80}4.421} & {\color{blue}\bfseries 3.088} & 1.393 & {\color{blue!80}0.655} & {\color{blue}\bfseries 0.638} & 0.0261 & {\color{blue!80}0.0237} & {\color{blue}\bfseries 0.0196} & 0.0211 & {\color{blue!80}0.0181} & {\color{blue}\bfseries 0.0172} \\
\hline
Ave & {\color{blue!80}4.805} & 5.252 & {\color{blue}\bfseries 3.433} & {\color{blue!80}2.699} & 3.022 & {\color{blue}\bfseries 2.195} & 0.0315 & {\color{blue!80}0.0314} & {\color{blue}\bfseries 0.0239} & 0.0228 & {\color{blue!80}0.0221} & {\color{blue}\bfseries 0.0200} \\
\thickhline
\end{tabular}
\end{table*}

\subsubsection{Effect of the epipolar BA}
In addition to PnP pose tracking, this simulation incorporates epipolar BA with a sliding window of four frames. The experimental setup remains the same as in the previous simulation, and two types of trajectories are tested.
The comparison of ATE and RPE is presented in Table~\ref{table_different_num_kf}.
It can be observed that incorporating BA significantly improves ATE, albeit with a slight trade-off in RPE. This is because, without BA, occasional large relative pose errors can propagate through the trajectory, degrading its accuracy and increasing the ATE. By rectifying these occasional large deviations, epipolar BA enhances trajectory consistency and overall accuracy.

\subsection{Experiments on real datasets} \label{experiments_real}
We evaluate the performance of CurrentFeature Odometry on two publicly available datasets: KITTI~\cite{geiger2013vision} and Oxford RobotCar~\cite{maddern20171}. For comparison, we include ORB-SLAM3~\cite{campos2021orb} and OV\textsuperscript{2}SLAM~\cite{ferrera2021ov}, the top two open-source stereo VO approaches on the KITTI dataset. To ensure a fair comparison and focus solely on odometry performance, the loop closure modules in both methods are disabled. For evaluation metrics, we adopt the ATE RMSE and RPE RMSE, which are also used in ORB-SLAM3 and OV\textsuperscript{2}SLAM. 

\begin{figure*}[htbp]
    \centering
    \includegraphics[width=\textwidth]{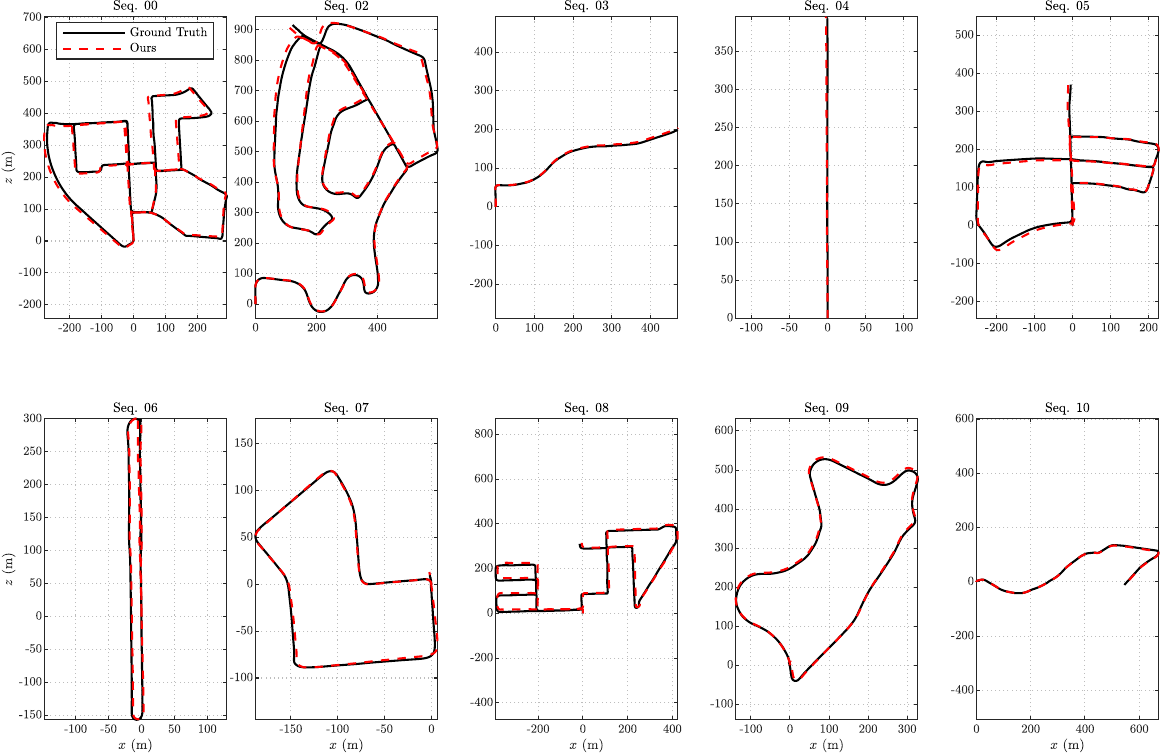}
    \caption{Top-view trajectory of KITTI sequences on grayscale images.}
    \label{fig:kitti_trajectories}
\end{figure*}

\subsubsection{KITTI Dataset} The KITTI dataset is captured in diverse outdoor driving scenarios, spanning urban, suburban, and rural environments. It offers synchronized and rectified stereo image pairs in both color and grayscale formats, recorded at a frame rate of 10 Hz. The dataset also includes ground-truth poses obtained from an OXTS3003 GPS/IMU unit for 11 training sequences (seq00-10), enabling trajectory evaluation and benchmarking. 
It has been testified in~\cite{cvivsic2022soft2} that the stereo rig is not strictly rigid and the relative rotation of the right camera to the left changes as the vehicle moves. Hence, in the epipolar BA, we also optimize the rotation of the rig. 
The parameter setting remains unchanged over all sequences. The TLS thresholds for PnP and epipolar BA are set as $5 \times 10^{-5}$ and $3 \times 10^{-4}$, respectively. The number of OFs used in each epipolar BA is at most $5$.

The ATE and RPE comparison results are summarized in Table~\ref{tab:KITTI comparison}. We see that on average all algorithms perform better on grayscale images compared to color images. This is primarily because color images are more susceptible to severe exposure changes, which can lead to inaccurate feature association or tracking.
As anticipated, our proposed CurrentFeature Odometry consistently outperforms ORB-SLAM3 and OV\textsuperscript{2}SLAM in terms of RPE, with the exception of the grayscale 08 sequence. Compared to the second-best method, our algorithm achieves a 24\% reduction in average RPE on color sequences. This improvement is attributed to the decoupling of pose and 3D point errors, along with our carefully designed Bias-Eli-W PnP estimator.
Interestingly, our method also achieves better ATE performance compared to SOTA SLAM pipelines. Specifically, it delivers the highest accuracy in 9 out of 10 color sequences, resulting in a 28\% improvement in average ATE compared to the second-best method, ORB-SLAM3. In the grayscale case, CurrentFeature Odometry achieves the best performance in half of the sequences and ranks second in the other half.
Our algorithm's reduced sensitivity to the challenges of color images can be attributed to the enhanced robustness provided by $\ell_1$-norm and TLS techniques. The excellent ATE performance of our method is largely due to the integration of a sliding-window epipolar BA, which improves overall trajectory consistency.
The trajectory visualization is presented in Figure~\ref{fig:kitti_trajectories}. Due to the extremely low drift speed, CurrentFeature Oodmetry's estimated trajectory matches the ground truth well in these large-scale scenes.

\begin{table*}[htbp]
\centering
\small
\caption{Comparison of ATE and RPE across different sequences in Oxford dataset. Values highlighted in {\color{blue} \textbf{blue bold}} represent the smallest, and values in {\color{blue!80} blue} denote the second smallest. The sign - indicates failing to estimate the whole trajectory.}
\label{tab:oxford comparison}
\setlength{\tabcolsep}{4pt}
\begin{tabular}{c|cc c|ccc|ccc}
\thickhline
\multicolumn{1}{c|}{\multirow{2}{*}{Sequence}} & \multicolumn{1}{c}{\multirow{2}{*}{Start frame}} & \multicolumn{1}{c}{\multirow{2}{*}{Stop frame}} & \multicolumn{1}{c|}{\multirow{2}{*}{Length (m)}} & \multicolumn{3}{c|}{ATE (m)} & \multicolumn{3}{c}{RPE (m)} \\
\cline{5-10}
 & & & & ORB3 & OV2 & Ours & ORB3 & OV2 & Ours \\
\hline
2014-05-14-13-59-05 & 1400075963932033 & 1400076150344330 & 945.49 & {\color{blue!80} 34.5311} & 34.6471 & {\color{blue}\bfseries 33.7454} & {\color{blue!80} 0.5126} & 0.5146 & {\color{blue}\bfseries 0.5061 } \\
2014-05-19-12-51-39 & 1400503987511809 & 1400504194609323 & 931.58 & - & {\color{blue!80} 28.9658} & {\color{blue}\bfseries 28.3328} & - & {\color{blue!80} 0.4761} & {\color{blue}\bfseries 0.4682} \\
2014-05-19-13-20-57 & 1400505700597042 & 1400506098919265 & 1989.92 & - & {\color{blue!80} 72.4767}  & {\color{blue}\bfseries 64.9658} & - & {\color{blue!80} 0.5768} & {\color{blue}\bfseries 0.5680} \\
2014-11-14-16-34-33 & 1415985043842007 & 1415985331240621 & 1531.52 & {24.1157} & { \color{blue!80} 15.5798} & { \color{blue}\bfseries 6.2296} & {\color{blue!80}0.5773} & 0.5872 & {\color{blue}\bfseries 0.5674} \\
\hline
\thickhline
\end{tabular}
\end{table*}


\begin{figure*}[!htbp]
    \centering
    \includegraphics[width=1\linewidth]{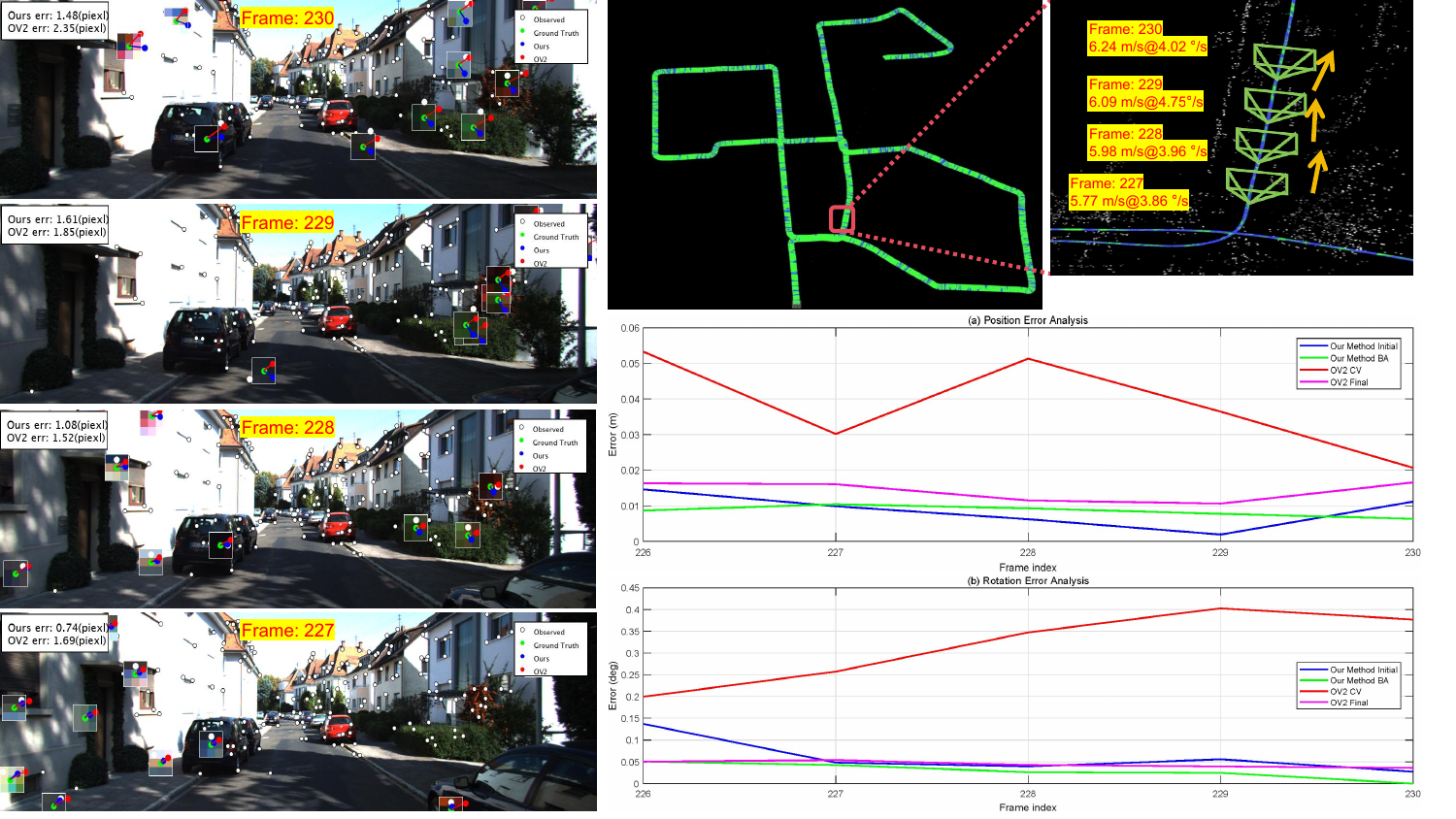}
    \caption{Error analysis on a challenging segment with varying velocity. The figure consists of four key components: the left are sequential images with reprojection results, the middle top shows the bird's-eye view of the trajectory with a zoomed section, the right top presents a detailed visualization of camera motion with velocity annotations, and the bottom plots quantitative errors.}
    \label{error_analysis}
\end{figure*}

\subsubsection{Oxford RobotCar Dataset}
The Oxford dataset was collected in central Oxford using a Point Grey Bumblebee XB3 trinocular stereo camera, capturing images at a resolution of 1280 $\times$ 960 pixels and a frame rate of 16 Hz. The vehicle's ground-truth trajectory was recorded with a GPS/INS system operating at 50 Hz. However, the accuracy of the GPS reception and the fused INS solution varies significantly depending on the environment~\cite{maddern20171}. Similar to SOFT2~\cite{cvivsic2022soft2}, we select four segments of the dataset that provide valid sensor data, reliable GPS/INS trajectories, and high-quality images for evaluation. Additionally, to eliminate irrelevant visual information such as the sky and the vehicle chassis, we cropped 140 pixels from both the top and bottom of the images.
The parameter settings remain the same as those in the KITTI trial across all four sequences.
The ATE and RPE comparisons are presented in Table~\ref{tab:oxford comparison}. Our algorithm achieves the best performance across all sequences for both metrics. It is noteworthy that compared to the KITTI dataset, the Oxford dataset poses greater challenges due to issues such as strong exposure. Consequently, ORB-SLAM3 fails to estimate the complete trajectory in two sequences. In contrast, our $\ell_1$-TLS-enhanced method demonstrates superior robustness, achieving significantly smaller ATE in the fourth sequence.

\subsection{Robustness against erratic motion}
Unlike OV\textsuperscript{2}SLAM which relies on a constant velocity (CV) motion model for pose prediction, our method directly estimates camera pose through PnP using 3D-2D correspondences, without making any assumptions about the motion model. The CV assumption in OV\textsuperscript{2}SLAM may introduce additional errors when the actual motion deviates from uniform motion.

To validate this, we analyze a challenging segment (frames 226-230) from KITTI color sequence 00. As shown in Figure~\ref{error_analysis}, within a brief 0.5-second span (5 frames), this segment features increasing linear velocity from 5.7 m/s to 6.2 m/s and fluctuating angular velocity up to 5 deg/s. Under such dynamic conditions, OV\textsuperscript{2}SLAM's CV prediction shows some instability, with position errors reaching 0.05 m and rotation errors accumulating up to 0.4 degrees. In comparison, our method, by directly solving PnP without motion assumptions, maintains relatively stable errors throughout this challenging segment. 
The reprojection results, which project the triangulated 3D points onto the CF using the ground-truth pose, our PnP pose, and the CV prediction pose, also demonstrate the superiority of our PnP estimator.
After BA optimization, our method achieves position errors below 0.01 m and rotation errors under 0.05 degrees, showing improved accuracy over OV\textsuperscript{2}SLAM's initial and final results.

\section{Conclusion} \label{conclusion}
In this paper, we revisited stereo VO and proposed a consistent PnP-enabled framework, CurrentFeature Odometry. Our method leverages triangulated points from the current keyframe for PnP tracking, effectively breaking the coupling between the pose and 3D point errors. Thanks to the decoupling, we accurately modeled the uncertainties of 3D points based on statistical theory and error propagation formulas. These uncertainties are then incorporated for both bias elimination and weighting during optimization.
We proved that our bias-eliminated PnP pose estimate converges to the true value with a rate of $1/\sqrt{n}$, which serves as a good initial value for subsequent iterative refinement. Additionally, the inclusion of an epipolar BA further enhances trajectory consistency.
Experimental results showed that CurrentFeature Odometry outperforms SOTA algorithms in terms of ATE and RPE in large-scale environments, while also exhibiting greater robustness to unpredictable motions.



\bibliographystyle{unsrtnat}
\bibliography{references}

\end{document}